\newtheorem{assumption}{Assumption} 
\theoremstyle{plain}
\newtheorem{theorem}{Theorem}
\newtheorem{proposition}{Proposition}
\theoremstyle{definition}
\newtheorem{definition}{Definition}
\theoremstyle{remark}
\theoremstyle{plain}
\title{Peer Effect Estimation in the Presence of Simultaneous Feedback and Unobserved Confounders}
\author {
    Xiaojing Du\textsuperscript{\rm 1},
    Jiuyong Li\textsuperscript{\rm 1},   
    Lin Liu\textsuperscript{\rm 1},
    Debo Cheng\textsuperscript{\rm 1},
    Thuc.Le\textsuperscript{\rm 1}
}
\begin{document}

\maketitle

\begin{abstract}

Estimating peer causal effects within complex real-world networks such as social networks is challenging, primarily due to simultaneous feedback between peers and unobserved confounders. Existing methods either address unobserved confounders while ignoring the simultaneous feedback, or account for feedback but under restrictive linear assumptions, thus failing to obtain accurate peer effect estimation. In this paper, we propose \textbf{DIG2RSI}, a novel \underline{\textbf{D}}eep learning framework which leverages \underline{\textbf{I}}-\underline{\textbf{G}} transformation (matrix operation) and \underline{\textbf{2SRI}} (an instrumental variable or IV technique) to address both simultaneous feedback and unobserved confounding, while accommodating complex, nonlinear and high-dimensional relationships. DIG2RSI first applies the \underline\bf{I}-\underline\bf{G} transformation to disentangle mutual peer influences and eliminate the bias due to the simultaneous feedback. To deal with unobserved confounding, we first construct valid IVs from network data. In stage 1 of 2RSI, we train a neural network on these IVs to predict peer exposure, and extract residuals as proxies for the unobserved confounders. In the stage 2, we fit a separate neural network augmented by an adversarial discriminator that incorporates these residuals as a control function and enforces the learned representation to contain no residual confounding signal. The expressive power of deep learning models in capturing complex non-linear relationships and adversarial debiasing enhances the effectiveness of DIG2RSI in eliminating bias from both feedback loops and hidden confounders. We prove consistency of our estimator under standard regularity conditions, ensuring asymptotic recovery of the true peer effect. Empirical results on two semi-synthetic benchmarks and a real-world dataset demonstrate that DIG2RSI outperforms existing approaches.

\end{abstract}


\section{Introduction}

Accurate estimation of \textbf{\textit{peer effects}} within networks is crucial in disciplines such as sociology~\cite{forastiere2021identification,lorenz2020social}, economics~\cite{bramoulle2009identification,ali2023peer}, and public health~\cite{liu2014identification,arcidiacono2005peer}. For instance, understanding how friends influence each other's health behaviors, such as exercise habits, smoking, or dietary choices, is vital for formulating effective public health policies. However, conducting causal inference in network settings presents two key challenges. First, \textbf{\textit{simultaneous feedback}} loops arise because an individual’s behavior both influences and is influenced by their peers at the same time, creating mutual causation that violates standard exogeneity assumptions~\cite{manski1993identification,ogburn2020causal}. Second, \textbf{\textit{unobserved confounders}} like environmental exposures or socioeconomic traits can affect multiple connected individuals at once, inducing spurious correlations~\cite{bramoulle2009identification,kallus2018removing}.

\begin{figure}[t]
	\centering
	\includegraphics[scale=0.18]{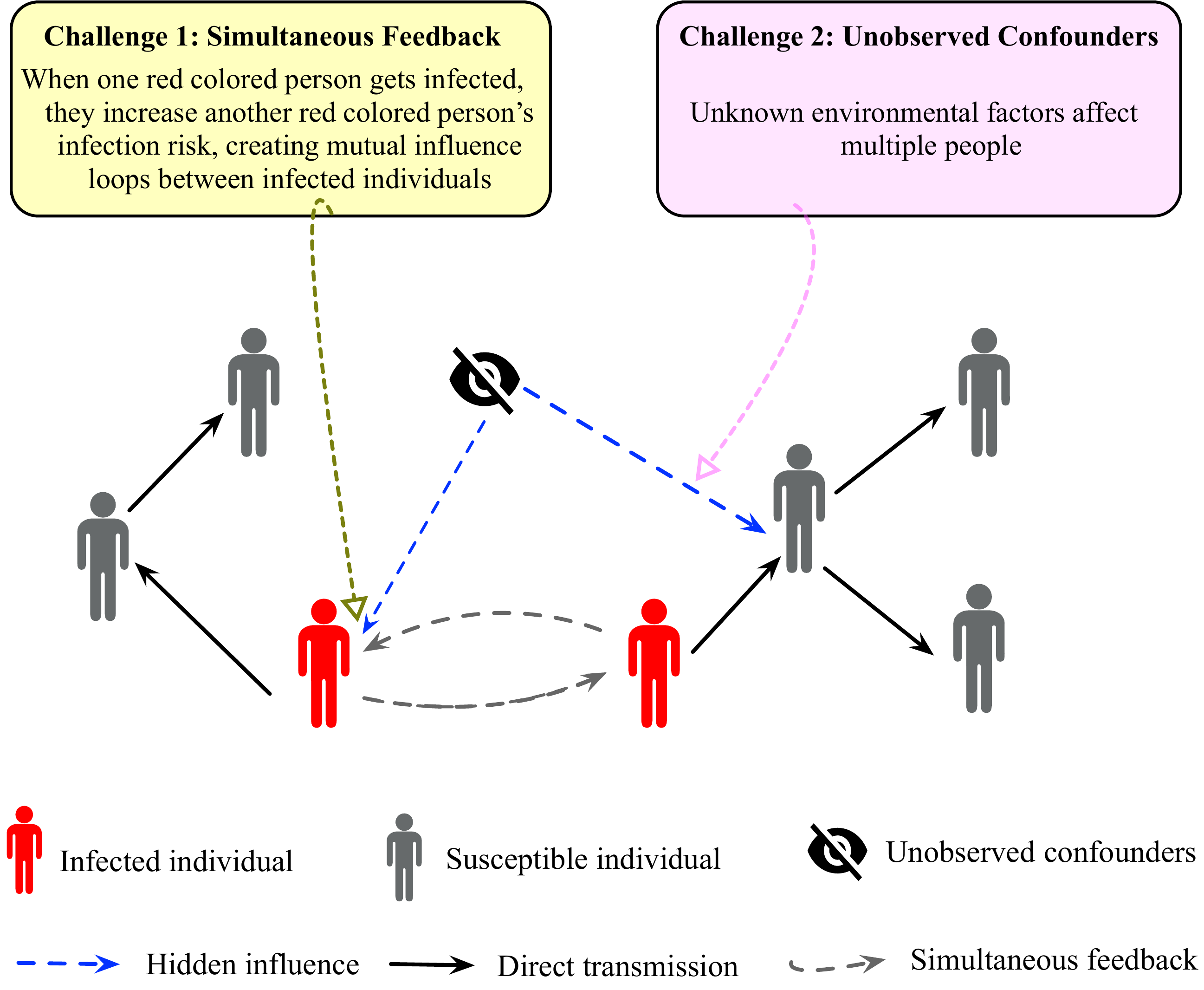}
	\caption{Illustrative example of the two core challenges in peer effect estimation in network setting: (a) simultaneous feedback loops among infected individuals, and (b) unobserved confounders affecting multiple individuals.} 
	\label{fig:1}
\end{figure}

To better illustrate the two challenges, we take the spread of infectious diseases as an example. In a community, when an individual becomes infected, they increase the likelihood of infection for their contacts; as those contacts in turn become infected, they feed back into the original individual’s risk, creating~\textit{simultaneous feedback}, as shown in Fig.~\ref{fig:1}. Furthermore, real-world transmission patterns are influenced by unobserved confounders such as external environment, further increasing the complexity of causal inference.


Instrumental Variables (IVs)~\cite{angrist1996identification,bollen2012instrumental,sargan1958estimation} provide an effective solution unobserved confounders.~\citet{bramoulle2009identification,von2019use,jochmans2023peer} used the average features of peers as IVs to identified peer effects in social networks using a \textit{linear-in-means} model. However, in reality, the effect of peers on an individual is often far more complex than a simple linear function of average peer behavior.

To address the linearity limitation,~\citet{boucher2024toward} proposed a peer effects model based on a Constant Elasticity of Substitution (CES) function with flexible parameters. However, using a CES function to model the social norm places strong restrictions on the functional form of the social norm and cannot accommodate potentially more complex peer‐reference mechanisms.


A summary of previous work on peer effects in the presence of simultaneous feedback and unobserved confounders is presented in Table~\ref{tab:peer-effects-comparison}. We observe a critical gap: existing methods mostly focus on tackling unobserved confounders without considering simultaneous feedback, and when they do consider both, they typically rely on restrictive linear assumptions, failing to handle nonlinear and high-dimensional dependencies simultaneously.

To address the research gap, we propose \textbf{DIG2RSI}, a novel \underline{\textbf{D}}eep learning framework which employs \underline{\textbf{I}}-\underline{\textbf{G}} transformation and the idea of \underline{\textbf{2SRI}} (two-stage residual inclusion) to deal with both simultaneous feedback and unobserved confounders at the same time, while accommodating complex, nonlinear, and high-dimensional network settings. \textbf{\textit{First}}, to disentangle and eliminate bias arising from simultaneous feedback loops, we introduce a specially designed differencing operation via the {\textbf{I}}-{\textbf{G}}~\cite{lesage2009introduction} transformation that “pre-whitens” the network before outcome modeling, thereby clarifying causal influence among peers. \textbf{\textit{Second}}, to cope with unobserved confounders, as well as the complex nonlinear relations and high-dimensionality of network data, we leverage the idea of the IV technique 2SRI~\cite{terza2008two} and the representation learning power of deep learning models. \textbf{\textit{Specifically}}, we construct valid instruments using second-order peer features, since they influence individual outcomes only through their effects on immediate peers and remain uncorrelated with unobserved confounders. We then develop a deep learning framework with two neural networks, one for regressing peer exposure on the constructed IVs and extracting residuals; and another for outcome prediction based using the residuals as control functions. Third, to further guard against any remaining confounding signal, we augment the second neural network with an adversarial discriminator that forces the learned representation to be orthogonal to the first-stage residuals. \textbf{\textit{Finally}}, our framework naturally accommodates continuous treatment variables, enabling a more realistic and flexible characterization of peer influence. To our knowledge, no prior work has jointly handled simultaneous feedback and unobserved confounders in nonlinear, high-dimensional network data as DIG2RSI does.

\begin{table}[!t]
  \centering
  \scriptsize
  \setlength{\tabcolsep}{3.5pt}   
  \begin{tabular}{@{}lccc@{}}
    \toprule
    \textbf{Representative Work} &
    \textbf{\begin{tabular}[c]{@{}c@{}}Simultaneous\\Feedback\end{tabular}} &
    \textbf{\begin{tabular}[c]{@{}c@{}}Unobserved\\Confounders\end{tabular}} &
    \textbf{Nonlinear} \\
    \midrule
    \citet{bramoulle2009identification} & \(\boldsymbol{\checkmark}\) & \(\boldsymbol{\checkmark}\) & \(\boldsymbol{\times}\) \\
    \citet{von2019use}                  & \(\boldsymbol{\times}\)     & \(\boldsymbol{\checkmark}\) & \(\boldsymbol{\times}\) \\
    \citet{jochmans2023peer}            & \(\boldsymbol{\times}\)     & \(\boldsymbol{\checkmark}\) & \(\boldsymbol{\times}\) \\
    \citet{boucher2024toward}           & \(\boldsymbol{\times}\)     & \(\boldsymbol{\checkmark}\) & \(\boldsymbol{\checkmark}\) \\
    Ours                                & \(\boldsymbol{\checkmark}\) & \(\boldsymbol{\checkmark}\) & \(\boldsymbol{\checkmark}\) \\
    \bottomrule
  \end{tabular}
  \caption{Summary of representative work on peer effect estimation.}
  \label{tab:peer-effects-comparison}
\end{table}

Our key contributions are as follows:
\begin{itemize}
  \item We introduce a tailored differencing operator to eliminate \textbf{simultaneous feedback loops}, enhancing the identifiability and accuracy of peer effect estimation.
\item We develop a deep-learning DIG2RSI framework for estimating peer effects with \textbf{unobserved confounders} in network data with nonlinear relationships, which is also applicable to continuous treatments.
  \item We provide a theoretical proof of consistency for the resulting estimator under standard regularity conditions.
  \item We demonstrate the effectiveness and robustness of IV‑2SRI on two semi‑synthetic benchmarks and a real‑world dataset.
\end{itemize}
\section{Related Work}

In network data, individuals' behaviors (outcomes) often influence each other, leading to the presence of peer effects~\cite{li2019randomization,gu2025peer,an2015instrumental}. In real-world scenarios, unobserved confounders may affect individual behaviors, violating the unconfoundedness assumption and thereby impacting the accuracy of causal inference. In this work, we categorize the estimation of peer effects in network data into two groups: those based on the unconfoundedness assumption and those that do not rely on this assumption~\cite{pearl2009causality,de2014testing}.

\textbf{Based on the Unconfoundedness Assumption.}
To estimate peer effects, \citet{ogburn2020causal} proposed a causal inference method based on chain graphs, demonstrating that under certain conditions, chain graph models can approximate the underlying causal directed acyclic graphs (DAGs). They applied this approach in both simulation studies and analyses of U.S. Supreme Court decision data. \citet{tchetgen2021auto} introduced the Auto-G-Computation method, which leverages chain graph models and Markov random field (MRF) structures, employing Gibbs sampling to compute network causal effects. Additionally, \citet{liu2020gmm} investigated the identification and estimation of peer effects in simultaneous equations network models, proposing a generalized method of moments (GMM) approach that combines linear and quadratic moment conditions to enhance estimation efficiency. \citet{bhattacharya2020causal} addressed situations where dependency structures in social networks are not fully known, proposing a causal inference method based on chain graphs. They utilized structure learning techniques and pseudolikelihood scoring methods to infer potential network structures from data. However, the above research typically assume the absence of unobserved confounders. In real-world contexts, this assumption often does not hold, and ignoring unobserved confounders can lead to biased causal estimates. 

\textbf{Without the Unconfoundedness Assumption.} To address this issue, researchers have proposed various methods. For instance,~\citet{liu2014identification} explored the interdependence of individual decision-making in social networks. They analyzed the use of Bonacich centrality as IVs within the Simultaneous Equations Network Models (SENM) framework to mitigate the impact of endogeneity on estimation results. However, their approach assumes that an individual's behavior is only influenced by direct friends, neglecting higher-order network effects (such as second- or third-order neighbors). ~\citet{o2014estimating} proposed an IV-based method to estimate peer effects in longitudinal dyadic data. Their approach utilizes genes (Mendelian Randomization) as IVs, combined with DAG analysis and two-stage least squares (2SLS), to address latent homophily and unobserved confounding. However, this study is primarily applicable to dyadic relationships and may not be directly generalizable to more complex social network structures.~\citet{bramoulle2009identification} and~\citet{von2019use} employed a mean linear model, using the average characteristics of second-order neighbors to address endogeneity issues in peer outcomes. However, assuming a linear relationship between individuals and their peers oversimplifies the inherent complexity of social interactions, potentially introducing bias in causal effect estimation.

\section{Preliminaries and Problem Setting}

In this section, we first introduce network data. We then present the underlying causal mechanism via a directed acyclic graph (DAG) (Fig.~\ref{fig:dag}), and justify the use of second‑order neighbor features as an IV. Finally, we define the peer effect formally and state our objective. Please refer to Appendix A for an introduction to the basic concepts related to IVs, and Appendix B for a summary of the symbols used in the paper.

\subsection{Network Data} We represent a social network using a graph $\mathcal{G} = (\mathcal{V}, \mathcal{E})$, where $\mathcal{V}$ is the set of $N$ nodes (units or individuals) and $\mathcal{E}$ is the set of edges (relationships). The network to be non-directional. Let \(\boldsymbol{G}\) denote the row-normalized adjacency matrix of the graph. For each node $i \in \mathcal{V}$, we observe a set of exogenous features $X_i$ and an outcome $Y_i$.

\begin{figure}[t]
	\centering
	\includegraphics[scale=0.3]{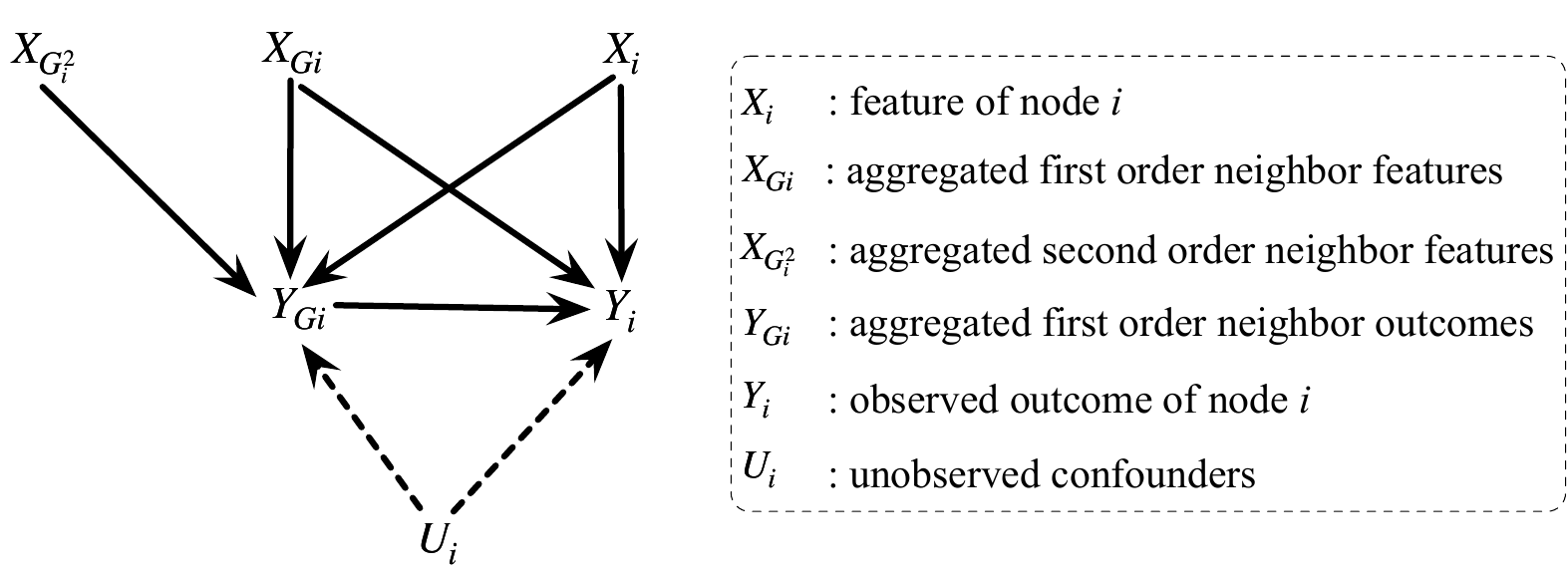}
	\caption{The causal DAG of our research problem. } 
	\label{fig:dag}
\end{figure}

\subsection{Causal DAG and Instrumental Variable}
To model the data generating process, we assume the causal DAG in Fig.~\ref{fig:dag}, which explicitly shows how a unit is influenced by others in the network. In the DAG, \(Y_{G i}\) denote the aggregated peer outcome that affects unit \(i\)'s outcome \(Y_i\), and \(U_i\) is an unobserved confounder influencing both \(Y_{G i}\) and \(Y_i\). \(Y_i\) is affected by its own features \(X_i\), first-order neighbor features \(X_{G_i}\), and the aggregated first-order neighbor outcome \(Y_{G i}\), while second-order neighbor features \(X_{G^{2}_{\mkern0.5mu i}}\) do not directly affect \(Y_i\).

To identify the peer effect \(Y_{G_i} \to Y_i\) in the presence of unobserved confounder $U_i$, we follow~\citet{bramoulle2009identification} and use the second-order neighbor features \(X_{G^{2}_{\mkern0.5mu i}}\) as an instrument. Intuitively, \(X_{G^{2}_{\mkern0.5mu i}}\) is influenced by the features of its neighbors, which correspond to \(i\)'s own features and those of its second-order neighbors. The second-order features \(X_{G^{2}_{\mkern0.5mu i}}\) affect \(Y_i\) only indirectly through \(Y_{G i}\) and do not have a direct effect on \(Y_i\); hence, \(X_{G^{2}_{\mkern0.5mu i}}\) serves as a valid instrument for \(Y_{G i}\) in the estimation of its effect on \(Y_i\). For a formal proof of the validity of \(X_{G^{2}_{\mkern0.5mu i}}\) being an IV, please see~\citet{bramoulle2009identification}

In practice, to handle non-linearities, the following assumption in addition to the assumptions for the standard IV (presented in Appendix A) is necessary.

\begin{assumption}[Monotonicity~\cite{angrist1995identification}]\label{ass:monotonicity}
For every individual \(i\) and any two instrumental values \(x_{G^{2}_{\mkern0.5mu i}} \le x_{G^{2}_{\mkern0.5mu i}}'\), the induced treatment levels satisfy
\begin{equation}
  Y_{Gi}(x_{G^{2}_{\mkern0.5mu i}}) \;\le\; Y_{Gi}(x_{G^{2}_{\mkern0.5mu i}}').
  \label{eq:monotonicity}
\end{equation}
\end{assumption}

\subsection{Problem Setting}

\begin{definition}[Peer Effect (PE)] Based on the causal DAG in Fig.~\ref{fig:dag}, PE can be defined as the rate of change in an individual $i$'s outcome, i.e., \(Y_i\) with an infinitesimal change in $i$'s neighbourhood outcome \(Y_{G i}\). That is
\begin{equation}
\mathrm{PE}
= \frac{d}{dy_{G_i}}\mathbb{E}\bigl[Y_i \,\big|\, \mathrm{do}(Y_{G_i} = y_{G_i})\bigr]
\label{eq:peer-effect}
\end{equation}
\end{definition}
\noindent where \(y_{G_i}\) is a specific value of \(Y_{G_i}\), and \(\mathrm{do}(Y_{G_i}=y_{G_i})\) means that value of \(Y_{G_i}\) is set to \(y_{G_i}\)~\cite{pearl2009causality}. When \(Y_i\) is modelled by a structural equation model, the coefficient of \(Y_{G_i}\) in the structural equation is a consistent estimate of PE~\cite{forastiere2024causal}.

\textbf{Objective.} We aim at estimating the PE in network data in the presence of simultaneous feedback and unobserved confounders.

\section{Method}

\label{lab:method}

As discussed in the Introduction, the inherent reciprocal nature and the unobserved confounders involved in peer interactions present two significant challenges to peer effect estimation with observational network data, in addition to the complex nonlinear relations. In this section, we present our DIG2RSI method (as illustrated in Fig.~\ref{fig:flow}) for dealing with these challenges, including the part of adopting \(\boldsymbol{I}\!-\!\boldsymbol{G}\) transformation for coping with feedback loop, and a novel deep learning framework for dealing with unobserved confounding and non-linearity supported by the IV technique 2SRI.

\subsection{Addressing Simultaneous Feedback}
The \textbf{\textit{first core challenge}} is that \(Y_{G i}\) and \(Y_i\) exhibit simultaneous feedback, which introduces bias when estimating peer effects.

\textbf{Structural Equations.} Based on the DAG Fig.~\ref{fig:dag}, we use a Structural Equation Model (SEM) to describe the data-generating process. For an individual \(i\), we have
\begin{equation}
  Y_i = \beta_i\,Y_{G i} + \gamma_i\,X_{G i} + \delta_i\,X_i + \lambda_i\,U_i + \varepsilon_i.
  \label{eq:structural-scalar}
\end{equation}
where \(\varepsilon_i\) denotes the noise term.. 

In vector/matrix form, letting \(\boldsymbol{Y}, \boldsymbol{X}, \boldsymbol{U}, \boldsymbol{\varepsilon}\) collect the corresponding quantities across all units, the SEM becomes
\begin{equation}\label{eq:prop-structural}
\begin{gathered}
  \boldsymbol{Y}
  = \boldsymbol{\beta}\,\boldsymbol{G}\,\boldsymbol{Y}
    + \boldsymbol{\gamma}\,\boldsymbol{G}\,\boldsymbol{X}
    + \boldsymbol{\delta}\,\boldsymbol{X}
    + \boldsymbol{\lambda}\,\boldsymbol{U}
    + \boldsymbol{\varepsilon},\\
  \mathbb{E}\bigl[\boldsymbol{\varepsilon}\mid \boldsymbol{X},\boldsymbol{U}\bigr]
  = \boldsymbol{0}.
\end{gathered}
\end{equation}
where \(\boldsymbol{\beta}, \boldsymbol{\gamma}, \boldsymbol{\delta}, \boldsymbol{\lambda}\) are coefficient parameters. Simultaneous feedback occurs because \(\boldsymbol{Y}\) appears on both sides of the structural equation.

\textbf{\(\boldsymbol{I}\!-\!\boldsymbol{G}\) Transformation.} 
To eliminate the bias caused by simultaneous feedback in networks, we adopt the differencing operation, known as \(\boldsymbol{I}\!-\!\boldsymbol{G}\) transformation. 

\begin{figure*}[t]
	\centering
	\includegraphics[scale=0.23]{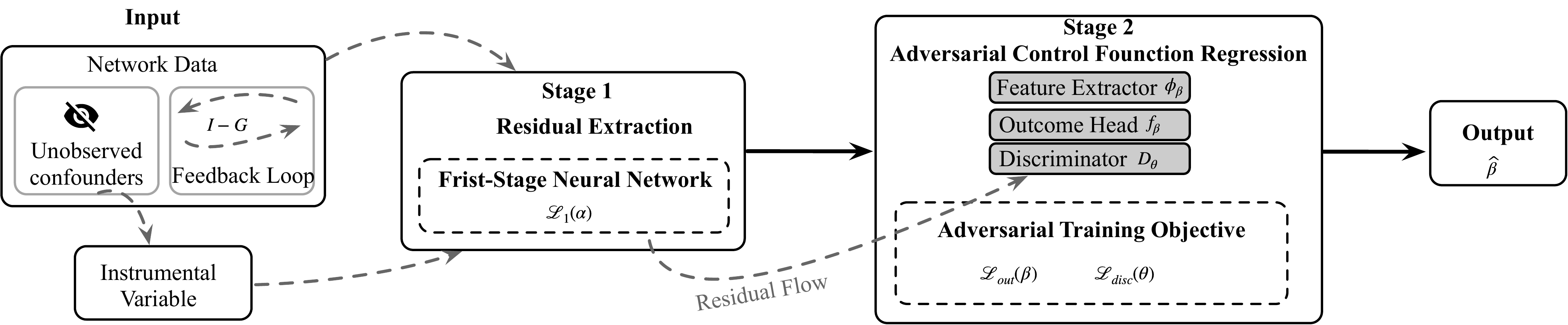}
	\caption{An overview of DIG2RSI, a novel \textbf{D}eep learning framework which employs \(\boldsymbol{I}\!-\!\boldsymbol{G}\) transformation for eliminating simultaneous feedback and embeds the idea of \textbf{2RSI} approach for dealing with unobserved confounders. } 
	\label{fig:flow}
\end{figure*}
\begin{proposition}[Elimination of Simultaneous Feedback]\label{prop:no-feedback}
Under the SEM in Equation~\eqref{eq:prop-structural}, assume \(\lvert\boldsymbol{\beta}\rvert<1\) and the spectral radius \(\rho(\boldsymbol{G})\le1\). The feedback loop can be eliminated via the \(\boldsymbol{I}\!-\!\boldsymbol{G}\) transformation.
\end{proposition}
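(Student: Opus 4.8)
The plan is to rewrite the structural system so that the endogenous peer outcome $\boldsymbol{G}\boldsymbol{Y}$ is collected on one side, and then to show that the resulting operator $\boldsymbol{I}-\boldsymbol{\beta}\boldsymbol{G}$ is invertible, so that $\boldsymbol{Y}$ can be expressed through exogenous quantities alone. Concretely, starting from Equation~\eqref{eq:prop-structural}, I would gather the two occurrences of $\boldsymbol{Y}$ by subtracting $\boldsymbol{\beta}\boldsymbol{G}\boldsymbol{Y}$ from both sides, obtaining
\[
(\boldsymbol{I}-\boldsymbol{\beta}\boldsymbol{G})\,\boldsymbol{Y}
= \boldsymbol{\gamma}\boldsymbol{G}\boldsymbol{X}+\boldsymbol{\delta}\boldsymbol{X}+\boldsymbol{\lambda}\boldsymbol{U}+\boldsymbol{\varepsilon}.
\]
This is precisely the $\boldsymbol{I}-\boldsymbol{G}$ transformation: the left-hand side no longer contains a feedback term, since $\boldsymbol{Y}$ now appears only once.

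The crux of the argument is to establish that $\boldsymbol{I}-\boldsymbol{\beta}\boldsymbol{G}$ is nonsingular, which is exactly where the two hypotheses enter. Treating $\boldsymbol{\beta}$ as a scalar autoregressive coefficient (as the hypothesis $\lvert\boldsymbol{\beta}\rvert<1$ suggests), the spectral radius of the product satisfies $\rho(\boldsymbol{\beta}\boldsymbol{G})=\lvert\boldsymbol{\beta}\rvert\,\rho(\boldsymbol{G})\le\lvert\boldsymbol{\beta}\rvert<1$. Hence every eigenvalue of $\boldsymbol{\beta}\boldsymbol{G}$ has modulus strictly below one, so $1$ cannot be an eigenvalue; equivalently $\det(\boldsymbol{I}-\boldsymbol{\beta}\boldsymbol{G})\neq 0$ and the matrix is invertible. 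I would also record the constructive justification via the Neumann series: because $\rho(\boldsymbol{\beta}\boldsymbol{G})<1$, the series $\sum_{k\ge 0}(\boldsymbol{\beta}\boldsymbol{G})^{k}$ converges and equals $(\boldsymbol{I}-\boldsymbol{\beta}\boldsymbol{G})^{-1}$, which both proves invertibility and exposes the interpretation of the inverse as the accumulation of higher-order peer transmissions.

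With invertibility in hand, left-multiplying by $(\boldsymbol{I}-\boldsymbol{\beta}\boldsymbol{G})^{-1}$ produces the reduced form
\[
\boldsymbol{Y} = (\boldsymbol{I}-\boldsymbol{\beta}\boldsymbol{G})^{-1}\bigl(\boldsymbol{\gamma}\boldsymbol{G}\boldsymbol{X}+\boldsymbol{\delta}\boldsymbol{X}+\boldsymbol{\lambda}\boldsymbol{U}+\boldsymbol{\varepsilon}\bigr),
\]
in which $\boldsymbol{Y}$ appears solely on the left, so the simultaneous feedback has been removed and the transformed system is recursive rather than simultaneous. I would close by remarking that the transformed outcome $(\boldsymbol{I}-\boldsymbol{\beta}\boldsymbol{G})\boldsymbol{Y}$ is what feeds the subsequent IV/2SRI stage, since this feedback-free representation is the prerequisite for treating $\boldsymbol{G}\boldsymbol{Y}$ as an ordinary (though still confounded) regressor.

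The step I expect to be the main obstacle is the invertibility claim, and in particular handling the boundary case $\rho(\boldsymbol{G})=1$ that the hypothesis explicitly permits: because the row-normalized adjacency matrix is row-stochastic, its spectral radius is in fact exactly $1$, so the strict inequality $\lvert\boldsymbol{\beta}\rvert<1$ is doing the real work and cannot be relaxed to equality. A secondary subtlety is the interpretation of $\boldsymbol{\beta}$: if one allows heterogeneous coefficients (a diagonal matrix rather than a scalar), the identity $\rho(\boldsymbol{\beta}\boldsymbol{G})=\lvert\boldsymbol{\beta}\rvert\,\rho(\boldsymbol{G})$ fails, and I would instead bound $\rho(\boldsymbol{\beta}\boldsymbol{G})\le\lVert\boldsymbol{\beta}\boldsymbol{G}\rVert_\infty\le(\max_i\lvert\beta_i\rvert)\,\lVert\boldsymbol{G}\rVert_\infty<1$, using submultiplicativity of the maximum-row-sum norm together with $\lVert\boldsymbol{G}\rVert_\infty=1$.
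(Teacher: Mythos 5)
Your derivation of the reduced form is correct and coincides with the first half of the paper's proof: you collect the endogenous term to obtain \((\boldsymbol{I}-\boldsymbol{\beta}\boldsymbol{G})\boldsymbol{Y}=\boldsymbol{\gamma}\boldsymbol{G}\boldsymbol{X}+\boldsymbol{\delta}\boldsymbol{X}+\boldsymbol{\lambda}\boldsymbol{U}+\boldsymbol{\varepsilon}\) and invert. Your justification of invertibility is in fact more careful than the paper's: the Neumann-series argument, the observation that a row-stochastic \(\boldsymbol{G}\) has \(\rho(\boldsymbol{G})=1\) exactly (so \(\lvert\boldsymbol{\beta}\rvert<1\) is what does the work), and the \(\lVert\cdot\rVert_\infty\) bound for a diagonal \(\boldsymbol{\beta}\) are all welcome additions that the paper leaves implicit.

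However, there is a genuine mismatch with what the proposition asserts. You identify the step of moving \(\boldsymbol{\beta}\boldsymbol{G}\boldsymbol{Y}\) to the left-hand side as ``precisely the \(\boldsymbol{I}-\boldsymbol{G}\) transformation,'' but in the paper that transformation is a distinct, subsequent operation: left-multiplication of the reduced form by the matrix \(\boldsymbol{I}-\boldsymbol{G}\) (with no \(\boldsymbol{\beta}\)), yielding
\begin{equation*}
(\boldsymbol{I}-\boldsymbol{G})\,\boldsymbol{Y}
=(\boldsymbol{I}-\boldsymbol{\beta}\boldsymbol{G})^{-1}
\bigl[(\boldsymbol{\delta}\boldsymbol{I}+\boldsymbol{\gamma}\boldsymbol{G})(\boldsymbol{I}-\boldsymbol{G})\boldsymbol{X}
+\boldsymbol{\lambda}(\boldsymbol{I}-\boldsymbol{G})\boldsymbol{U}
+(\boldsymbol{I}-\boldsymbol{G})\boldsymbol{\varepsilon}\bigr],
\end{equation*}
where the rearrangement uses that \(\boldsymbol{I}-\boldsymbol{G}\) commutes with \((\boldsymbol{I}-\boldsymbol{\beta}\boldsymbol{G})^{-1}\) and with \(\boldsymbol{G}\) (all being polynomials or power series in \(\boldsymbol{G}\) for scalar \(\boldsymbol{\beta}\) --- a fact that fails in your heterogeneous-\(\boldsymbol{\beta}\) extension). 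This differenced form is what the method actually consumes: the preprocessed variables \(\widetilde{\boldsymbol{Y_G}}=(\boldsymbol{I}-\boldsymbol{G})\boldsymbol{Y_G}\), \(\widetilde{\boldsymbol{X_G}}=(\boldsymbol{I}-\boldsymbol{G})\boldsymbol{X_G}\), etc., feed the 2SRI stages. Your closing claim that \((\boldsymbol{I}-\boldsymbol{\beta}\boldsymbol{G})\boldsymbol{Y}\) is what enters the subsequent stage is therefore both inconsistent with the paper and unimplementable, since \(\boldsymbol{\beta}\) is the unknown target parameter and the operator \(\boldsymbol{I}-\boldsymbol{\beta}\boldsymbol{G}\) cannot be applied to data. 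To complete the proof as stated, you need to add the \((\boldsymbol{I}-\boldsymbol{G})\)-differencing step and the commutation argument that produces the final no-feedback representation.
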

\begin{proof}

Re-arranging Equation~\eqref{eq:prop-structural} by moving the feedback term to the left gives:
\begin{equation}\label{eq:proof-move}
  (\boldsymbol{I} - \boldsymbol{\beta}\,\boldsymbol{G})\,\boldsymbol{Y}
  = \boldsymbol{\gamma}\,\boldsymbol{G}\,\boldsymbol{X}
    + \boldsymbol{\delta}\,\boldsymbol{X}
    + \boldsymbol{\lambda}\,\boldsymbol{U}
    + \boldsymbol{\varepsilon}.
\end{equation}
\(\boldsymbol{I}\) is the identity matrix.

Assuming \(\lvert\boldsymbol{\beta}\rvert<1\) and \(\rho(\boldsymbol{G})\le1\), then \(\boldsymbol{I}-\boldsymbol{\beta}\boldsymbol{G}\) is invertible, so
\begin{equation}\label{eq:proof-solve}
  \boldsymbol{Y}
  = (\boldsymbol{I} - \boldsymbol{\beta}\,\boldsymbol{G})^{-1}
    \bigl[
      \boldsymbol{\gamma}\,\boldsymbol{G}\,\boldsymbol{X}
      + \boldsymbol{\delta}\,\boldsymbol{X}
      + \boldsymbol{\lambda}\,\boldsymbol{U}
      + \boldsymbol{\varepsilon}
    \bigr].
\end{equation}
For consistent estimation of PE in the presence of the network-induced dependencies~\cite{manski1993identification}, we need a version of the structural equation. Differencing both sides with \((\boldsymbol{I} - \boldsymbol{G})\) yields:
\begin{equation}\label{eq:proof-diff}
\begin{split}
  (\boldsymbol{I} - \boldsymbol{G})\,\boldsymbol{Y}
  &= (\boldsymbol{I} - \boldsymbol{G})\,(\boldsymbol{I} - \boldsymbol{\beta}\,\boldsymbol{G})^{-1}\\
  &\quad\times
    \bigl[
      \boldsymbol{\gamma}\,\boldsymbol{G}\,\boldsymbol{X}
      + \boldsymbol{\delta}\,\boldsymbol{X}
      + \boldsymbol{\lambda}\,\boldsymbol{U}
      + \boldsymbol{\varepsilon}
    \bigr].
\end{split}
\end{equation}
Rearranging gives the final “no-feedback” form:
\begin{equation}\label{eq:no-feedback-final}
\begin{split}
  (\boldsymbol{I} - \boldsymbol{G})\,\boldsymbol{Y}
  &= (\boldsymbol{I} - \boldsymbol{\beta}\,\boldsymbol{G})^{-1}
     \bigl[
       (\boldsymbol{\delta}\,\boldsymbol{I} + \boldsymbol{\gamma}\,\boldsymbol{G})\,(\boldsymbol{I}-\boldsymbol{G})\,\boldsymbol{X} \\
  &\quad\;+\,\boldsymbol{\lambda}\,(\boldsymbol{I}-\boldsymbol{G})\,\boldsymbol{U}
     + (\boldsymbol{I}-\boldsymbol{G})\,\boldsymbol{\varepsilon}
     \bigr].
\end{split}
\end{equation}
This shows that after these transformations the simultaneous feedback effect embedded in \(\boldsymbol{G}\boldsymbol{Y}\) is removed, yielding a representation suitable for consistent estimation.
\end{proof}

\begin{algorithm}[t!]
\caption{DIG2RSI}
\label{alg:iv2sri}
\textbf{Input:} Adjacency matrix \(\boldsymbol G\), features \(\boldsymbol X\), first-order features \(\boldsymbol {X_{G}}\); outcomes \(\boldsymbol Y\); instrument second-order features \(\boldsymbol {X_{G^2}}\); neural network classes for stage 1 and stage 2; learning rates \(\eta_1,\eta_2,\eta_{\text{disc}}\); adversarial weight \(\lambda_{\text{a}}\); epochs \(o,p\).\\
\textbf{Output:} Peer effect parameter \(\widehat{\boldsymbol\beta}\).
\begin{algorithmic}[1]
  \STATE \textbf{\(\boldsymbol{I}-\boldsymbol{G}\) Transformation} to remove feedback.\\
   \textbf{Stage 1: Estimate endogenous peer outcome}
  \STATE Initialize neural network \(r_{\boldsymbol\alpha}\).
  \FOR{epoch = 1 to \(o\)}
    \STATE \(\widehat{\boldsymbol {Y_{G}}} \gets r_{\boldsymbol\alpha}(\boldsymbol {X_{G^2}}, \boldsymbol {X_{G}}, \boldsymbol X)\)
    \STATE \(\mathcal{L}_1 \gets \frac{1}{n}\|\boldsymbol  {Y_{G}} - \widehat{\boldsymbol  {Y_{G}}}\|_2^2\)
    \STATE \(\boldsymbol\alpha \gets \boldsymbol\alpha - \eta_1 \nabla_{\boldsymbol\alpha} \mathcal{L}_1\)
  \ENDFOR
  \STATE Compute residual \(\widehat{\boldsymbol V} \gets \boldsymbol  {Y_{G}} - r_{\boldsymbol\alpha}(\boldsymbol {X_{G^2}}, \boldsymbol {X_{G}}, \boldsymbol X)\)

   \textbf{Stage 2: Adversarial Control Function Regression}
  \STATE Initialize outcome model parameters \(\boldsymbol\beta\) and discriminator parameters \(\boldsymbol\theta\).
  \FOR{epoch = 1 to \(p\)}
    \STATE Construct input \(\boldsymbol Z \gets (\boldsymbol  {Y_{G}}, \boldsymbol {X_{G}}, \boldsymbol X, \widehat{\boldsymbol V})\)
    \STATE Representation: \(\boldsymbol h \gets \phi_{\boldsymbol\beta}(\boldsymbol Z)\)  
    \STATE Outcome prediction: \(\widehat{\boldsymbol Y} \gets f_{\boldsymbol\beta}(\boldsymbol h)\)
    \STATE Discriminator prediction of residual: \(\widehat{\boldsymbol V}_{\text{disc}} \gets D_{\boldsymbol\theta}(\boldsymbol h)\)
    \STATE Compute losses: 
       $
         \mathcal{L}_{\text{out}}
         = \frac{1}{n}\|\widehat{\boldsymbol Y} - \boldsymbol Y\|_2^2,\quad
         \mathcal{L}_{\text{disc}}
         = \frac{1}{n}\|\widehat{\boldsymbol V}_{\text{disc}} - \widehat{\boldsymbol V}\|_2^2.
       $
    \STATE \textbf{(a) Discriminator update:} 
      \(\boldsymbol\theta \gets \boldsymbol\theta - \eta_{\text{disc}} \nabla_{\boldsymbol\theta} \mathcal{L}_{\text{disc}}\)
    \STATE \textbf{(b) Main model update:} 
      $
        \boldsymbol\beta \gets \boldsymbol\beta - \eta_2 \nabla_{\boldsymbol\beta}
        \bigl(\mathcal{L}_{\text{out}} - \lambda_{\text{a}}\mathcal{L}_{\text{disc}}\bigr)
      $
  \ENDFOR
  \STATE \textbf{Return:} \(\widehat{\boldsymbol\beta}\)
\end{algorithmic}
\end{algorithm}

\subsection{Addressing Unobserved Confounding and Nonlinearity}

To tackle the bias caused by unobserved confounders and the non-linear relations in network data, as well as removing the restriction of binary treatment, we propose a deep learning framework which embeds the idea the two-stage residual inclusion (2SRI)~\cite{terza2008two} and is equipped with the ability to cope with non-linear relations, high-dimensionality and non-binary treatment.

Specifically, as in illustrated in Fig.~\ref{fig:flow} and detailed in Algorithm~\ref{alg:iv2sri} (lines 2 to 17), we replace both linear regressions with neural networks, enabling the model to capture rich nonlinearities and and high-dimensional features. In \textbf{\textit{Stage 1}}, a small multi‐layer perceptron approximates the conditional expectation of the \(\boldsymbol{I}\!-\!\boldsymbol{G}\) transformed peer outcome using valid instruments, and the resulting residuals serve as control‐function proxies for unobserved confounders. \textbf{\textit{In Stage 2}}, these residuals are incorporated into a second neural outcome model; to eliminate any remaining confounding signal, we further augment this stage with an adversarial discriminator that forces the learned representation to be orthogonal to the first‐stage residuals. This two‐part architecture thus yields consistent, flexible estimates of peer effects under complex network.

As shown in Algorithm~\ref{alg:iv2sri}, before the two stages of 2RSI, first apply the \(\boldsymbol{I}\!-\!\boldsymbol{G}\) transformation to remove simultaneous feedback, as justified in Proposition~\ref{prop:no-feedback}. Define the preprocessed (no-feedback) variables
$
\widetilde{\boldsymbol{Y_G}} := (\boldsymbol{I}\!-\!\boldsymbol{G})\,\boldsymbol{Y_G},\quad
\widetilde{\boldsymbol{X_G}} := (\boldsymbol{I}\!-\!\boldsymbol{G})\,\boldsymbol{X_G},\quad
\widetilde{\boldsymbol{X_{G^2}}} := (\boldsymbol{I}\!-\!\boldsymbol{G})\,\boldsymbol{X_{G^2}},\quad
\widetilde{\boldsymbol U} := (\boldsymbol{I}\!-\!\boldsymbol{G})\,\boldsymbol U.
$ In the derivation below, for notational simplicity we drop the tildes and treat all variables as already preprocessed, i.e., every occurrence of \(\boldsymbol {Y_{G}},\boldsymbol {X_{G}},\boldsymbol {X_{G^2}},\boldsymbol U\) refers to the no-feedback version.

\subsubsection{\textbf{\textit{Stage 1 of DIG2RSI}: Residual Extraction.}} We train a small MLP (multi‐layer perceptron) to approximate the conditional expectation \({{Y}}_{G i} = r_{\boldsymbol\alpha}(X_{G^{2}_{\mkern0.5mu i}},\,X_{G i}, X_i\)), where \(r_{\alpha}\) has two hidden layers with ReLU activations, BatchNorm and Dropout to guard against overfitting.  We fit \(\boldsymbol\alpha\) by minimizing
\begin{equation}\label{eq:first_stage_loss}
\mathcal L_{1}(\boldsymbol\alpha)
= \frac{1}{n}\sum_{i=1}^n
    \bigl(r_{\alpha}(X_{G^{2}_{\mkern0.5mu i}},\,X_{G i},\,X_i) - \ Y_{G i}\bigr)^2.
\end{equation}

After convergence, we compute the control‐function residual
\begin{equation}\label{eq:first_residual}
  \widehat V_i
  =  Y_{G i} 
    \;-\; \widehat{ {Y}}_{G i}\,.
\end{equation}
which serves as a proxy for the unobserved confounder’s effect \(\omega_1 U_i\).

\subsubsection{\textbf{\textit{Stage 2 of DIG2RSI}: Adversarial Control Function Regression.}} Define the second-stage input vector \(Z_i^{(2)} = (Y_{G i}, X_{G i}, X_i, \widehat V_i)\). Our adversarial architecture comprises three modules:

\begin{itemize}
  \item A \emph{feature extractor} 
    \(\displaystyle \boldsymbol h_i = \phi_{\boldsymbol\beta}(Z_i^{(2)})\), implemented as the final hidden layer of a 2‑layer MLP on 
    \(Z_i^{(2)}\).  Its output \(\boldsymbol h_i\in\mathbb {R}^r\) is the learned node embedding.  
  \item An \emph{outcome head}  
    $
      f_{\beta}(Z_i^{(2)}) = w_{\rm out}^\top \boldsymbol h_i + b_{\rm out},
   $
    where \(w_{\rm out}\in\mathbb R^r\) is the learned weight vector and \(b_{\rm out}\in\mathbb R\) is the learned bias; together they map \(\boldsymbol h_i\) to the prediction of \(Y_i\).  
  \item A \emph{discriminator}  
    $
      D_{\theta}(\boldsymbol h_i) = w_{\rm disc}^\top \boldsymbol h_i + b_{\mathrm disc},
    $
    where \(w_{\rm disc}\in\mathbb R^r\) and \(b_{\rm disc}\in\mathbb R\) are learned, and \(D_{\theta}(\boldsymbol h_i)\) attempts to reconstruct the first‐stage residual \(\widehat V_i\) from \(\boldsymbol h_i\).
\end{itemize}

We optimize two losses:
\begin{align}
\mathcal L_{\rm out}(\beta)
&= \frac{1}{n}\sum_{i=1}^n
   \bigl(f_{\boldsymbol\beta}(Z_i^{(2)}) -Y_i\bigr)^2,
   \label{eq:adv_out}
\end{align}
\begin{equation}
\mathcal L_{\rm disc}(\theta)
= \frac{1}{n}\sum_{i=1}^n
   \bigl(D_{\theta}(\boldsymbol h_i) - \widehat V_i\bigr)^2.
\label{eq:adv_disc}
\end{equation}

The overall adversarial objective is the min–max problem
\begin{equation}\label{eq:adv_obj}
\min_{\boldsymbol\beta}\;\max_{\boldsymbol\theta}\;\Bigl[
  \mathcal L_{\rm out}(\boldsymbol\beta)
  \;-\;\lambda_{\rm a}\,
    \mathcal L_{\rm disc}(\boldsymbol\theta)
\Bigr].
\end{equation}
where \(\lambda_{\rm a}>0\) trades off outcome‐fit against confounder‐removal.  

\paragraph{Optimization procedure.} We alternate between:
\begin{enumerate}
  \item Discriminator step (\(\boldsymbol\theta\)‐update):  
    \(\boldsymbol\theta\leftarrow\boldsymbol\theta - \eta_{\rm disc}\,\nabla_{\boldsymbol\theta}\,\mathcal L_{\rm disc}(\boldsymbol\theta)\).  
  \item Main‐model step (\(\boldsymbol\beta\)‐update):  
    \(\boldsymbol\beta\leftarrow\boldsymbol\beta - \eta_{\rm 2}\,\nabla_{\boldsymbol\beta}
      \bigl[\mathcal L_{\rm out}(\boldsymbol\beta)
             - \lambda_{\rm a}\,\mathcal L_{\rm disc}(\boldsymbol\theta)\bigr].\)
\end{enumerate}

At convergence, the learned embedding \(\boldsymbol h_i\) is maximally predictive of the outcome but minimally predictive of the residual \(\widehat V_i\), ensuring that the control function successfully removes the influence of unobserved confounders.

\subsection{Consistency of the DIG2RSI Estimator}

Consistency of an estimator provides a fundamental guarantee for valid causal effect inference. In this subsection, we establish that the DIG2RSI estimator is consistent for the peer effect parameter under standard regularity and identification conditions.

\paragraph{Scalar Structural Form.} For each unit \(i\), after feedback removal, the data-generating process is
\begin{align}
  Y_{G i}
  &= \phi_i\,X_{G^{2}_{\mkern0.5mu i}}
    + \psi_i\,X_{G i}
    + \delta_i\,X_i
    + \omega_i\,U_i
    + \varepsilon_i^{(1)},\label{eq:scalar-first-stage}\\
  Y_i
  &= \beta_i\,Y_{G i}
    + \gamma_i\,X_{G i}
    + \delta_i\,X_i
    + \lambda_i\,U_i
    + \varepsilon_i^{(2)}.\label{eq:scalar-second-stage}
\end{align}

\paragraph{Matrix Structural Form.} Let \(\boldsymbol {Y_{G}} = \boldsymbol G \boldsymbol Y\), \(\boldsymbol {X_{G}} = \boldsymbol G \boldsymbol X\), and \(\boldsymbol {X_{G^2}} = \boldsymbol G^2 \boldsymbol X\), all understood to be the preprocessed (i.e., \(\boldsymbol{I}\!-\!\boldsymbol{G}\) transformed) versions. Then
\begin{align}
  \boldsymbol {Y_{G}}
  &= \boldsymbol \Phi\,\boldsymbol {X_{G^2}}
    + \boldsymbol \Psi\,\boldsymbol {X_{G}}
     + \boldsymbol \delta\,\boldsymbol X
    + \boldsymbol \omega\,\boldsymbol U
    + \boldsymbol \varepsilon^{(1)},\label{eq:matrix-first-stage}\\
  \boldsymbol Y
  &= \boldsymbol \beta\,\boldsymbol {Y_{G}}
    + \boldsymbol \gamma\,\boldsymbol {X_{G}}
    + \boldsymbol \delta\,\boldsymbol X
    + \boldsymbol \lambda\,\boldsymbol U
    + \boldsymbol \varepsilon^{(2)}.\label{eq:matrix-second-stage}
\end{align}
where \(\boldsymbol \Phi,\boldsymbol \Psi,\boldsymbol \omega\) are parameter matrices and \(\boldsymbol \varepsilon^{(1)},\boldsymbol \varepsilon^{(2)}\) are noise vectors.

\subsection{Error Bound Estimate}

In this part, we provide the two-stage error bound estimates.

\paragraph{\textbf{\textit{Stage 1 of DIG2RSI}: Approximation and Consistency of Residuals.}}

To obtain consistency of the control function, we need that the first-stage estimator approximates the true conditional expectation well. 

\begin{assumption}[Approximation and Estimation Accuracy]\label{ass:first-stage-accuracy}
Let \(\mathcal{F}_n=\{r(\cdot;\boldsymbol\alpha):\boldsymbol\alpha\in\Theta_n\}\) be the class of neural networks used to estimate 
\begin{equation}\label{eq:oracle-function}
\boldsymbol{r}^*(\boldsymbol {X_{G^2}},\boldsymbol{X_{G}},\boldsymbol{X})
=
\mathbb{E}\bigl[
  \boldsymbol{Y_{G}}
  \bigm|
  \boldsymbol {X_{G^2}},
  \boldsymbol{X_{G}}
  ,\boldsymbol{X}
\bigr].
\end{equation}
Assume:
\begin{enumerate}
  \item (\emph{Approximation}) For any \(\epsilon>0\) there exists \(n_0\) such that for all \(n\ge n_0\),
\begin{equation}\label{eq:approximation}
\begin{aligned}
    &\inf_{\boldsymbol\alpha\in\Theta_n}
    \mathbb{E}\bigl[\|\boldsymbol{r}^*(\boldsymbol {X_{G^2}},\boldsymbol{X_{G}},\boldsymbol{X}) \\
    & \qquad - \boldsymbol r(\boldsymbol {X_{G^2}},\boldsymbol{X_{G}},\boldsymbol{X};\boldsymbol\alpha)\|_2^2\bigr]
    < \epsilon.
\end{aligned}
\end{equation}
  \item (\emph{Estimation}) A uniform law of large numbers holds over \(\mathcal{F}_n\), so that the empirical risk~\cite{vapnik1999overview}
\begin{equation}\label{eq:empirical-risk}
    \widehat{\boldsymbol R}_{1,n}(\boldsymbol\alpha)
    = \frac{1}{n}\Big\|\boldsymbol{Y_{G}} - \boldsymbol r(\boldsymbol {X_{G^2}},\boldsymbol{X_{G}},\boldsymbol{X};\boldsymbol\alpha)\Big\|_2^2
  \end{equation}
  converges uniformly to its population counterpart, and the optimizer \(\widehat{\boldsymbol\alpha}_n = \arg\min_{\boldsymbol\alpha\in\Theta_n}\widehat{\boldsymbol R}_{1,n}(\boldsymbol\alpha)\) satisfies consistency toward the best approximant.
\end{enumerate}
\end{assumption}

\begin{proposition}[Stage 1 Residual Consistency]\label{prop:first-stage-residual}
Under Assumptions~\ref{ass:first-stage-accuracy}, the first-stage residual
\begin{equation}\label{eq:residual-def}
\widehat{\boldsymbol V}
= \boldsymbol {Y_{G}} - \boldsymbol r(\boldsymbol {X_{G^2}},\boldsymbol {X_{G}},\boldsymbol{X}; \widehat{\boldsymbol\alpha}_n)
\end{equation}
satisfies
\begin{equation}\label{eq:residual-consistency}
\widehat{\boldsymbol V} \xrightarrow{p}
\boldsymbol V^*
= \boldsymbol\omega\,\boldsymbol U + \boldsymbol\varepsilon^{(1)}.
\end{equation}
\end{proposition}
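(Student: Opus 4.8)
The plan is to split the fitted residual into a \emph{structural} component that equals the target $\boldsymbol V^*$ exactly and an \emph{estimation-error} component that vanishes in probability. First I would record the algebraic identity linking the oracle regression function to the structural noise. Substituting the first-stage structural form of Equation~\eqref{eq:matrix-first-stage} into the definition of the oracle function in Equation~\eqref{eq:oracle-function}, and invoking the IV exogeneity conditions so that $\mathbb{E}[\boldsymbol\omega\,\boldsymbol U + \boldsymbol\varepsilon^{(1)} \mid \boldsymbol{X_{G^2}},\boldsymbol{X_G},\boldsymbol X]=\boldsymbol 0$, the conditional mean collapses onto the systematic part:
\[
  \boldsymbol r^*(\boldsymbol{X_{G^2}},\boldsymbol{X_G},\boldsymbol X)
  = \boldsymbol\Phi\,\boldsymbol{X_{G^2}} + \boldsymbol\Psi\,\boldsymbol{X_G} + \boldsymbol\delta\,\boldsymbol X .
\]
Subtracting this from $\boldsymbol{Y_G}$ then yields the exact population identity $\boldsymbol{Y_G}-\boldsymbol r^*=\boldsymbol\omega\,\boldsymbol U+\boldsymbol\varepsilon^{(1)}=\boldsymbol V^*$, so the population residual is precisely the quantity we want to hit.

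Next I would bound the gap between the fitted residual and this population residual. By construction the $\boldsymbol{Y_G}$ terms cancel, leaving
\[
  \widehat{\boldsymbol V}-\boldsymbol V^*
  = \bigl(\boldsymbol{Y_G}-\boldsymbol r(\cdot;\widehat{\boldsymbol\alpha}_n)\bigr)
    -\bigl(\boldsymbol{Y_G}-\boldsymbol r^*\bigr)
  = \boldsymbol r^* - \boldsymbol r(\cdot;\widehat{\boldsymbol\alpha}_n),
\]
so it suffices to prove $\|\boldsymbol r(\cdot;\widehat{\boldsymbol\alpha}_n)-\boldsymbol r^*\|_2\xrightarrow{p}0$. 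Here I would introduce the best approximant $\boldsymbol r^\dagger_n=\boldsymbol r(\cdot;\boldsymbol\alpha^\dagger_n)$ with $\boldsymbol\alpha^\dagger_n=\arg\min_{\boldsymbol\alpha\in\Theta_n}\mathbb{E}\|\boldsymbol r^*-\boldsymbol r(\cdot;\boldsymbol\alpha)\|_2^2$ and apply the triangle inequality $\|\boldsymbol r(\cdot;\widehat{\boldsymbol\alpha}_n)-\boldsymbol r^*\|_2\le\|\boldsymbol r(\cdot;\widehat{\boldsymbol\alpha}_n)-\boldsymbol r^\dagger_n\|_2+\|\boldsymbol r^\dagger_n-\boldsymbol r^*\|_2$. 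The second summand is made smaller than any $\epsilon$ for large $n$ by the \emph{approximation} clause of Assumption~\ref{ass:first-stage-accuracy} (Equation~\eqref{eq:approximation}). The first summand vanishes by the \emph{estimation} clause: the uniform law of large numbers forces the empirical risk $\widehat{\boldsymbol R}_{1,n}$ of Equation~\eqref{eq:empirical-risk} to converge uniformly over $\mathcal F_n$ to its population counterpart, so the empirical minimizer $\widehat{\boldsymbol\alpha}_n$ is consistent toward the best approximant.

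Combining the two steps gives $\widehat{\boldsymbol V}-\boldsymbol V^*=\boldsymbol r^*-\boldsymbol r(\cdot;\widehat{\boldsymbol\alpha}_n)\xrightarrow{p}\boldsymbol 0$, which is the claimed $\widehat{\boldsymbol V}\xrightarrow{p}\boldsymbol V^*$. The hard part will not be the algebra but making the estimation clause rigorous: because the sieve $\Theta_n$ expands with $n$, one genuinely needs a uniform law of large numbers over the \emph{growing} network class $\mathcal F_n$ rather than a fixed one, and care is required to take the approximation and estimation $\epsilon$-bounds simultaneously small (e.g.\ by a diagonal/$\epsilon_n\to0$ argument). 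A secondary subtlety is justifying the exogeneity step $\mathbb{E}[\boldsymbol\omega\,\boldsymbol U+\boldsymbol\varepsilon^{(1)}\mid\boldsymbol{X_{G^2}},\boldsymbol{X_G},\boldsymbol X]=\boldsymbol 0$, which rests on the validity of $\boldsymbol{X_{G^2}}$ as an instrument (its independence from $\boldsymbol U$) together with exogeneity of the observed features, as set out in the IV preliminaries.
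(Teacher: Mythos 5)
Your proposal is correct and follows essentially the same route as the paper's own sketch: establish $\boldsymbol r(\cdot;\widehat{\boldsymbol\alpha}_n)\xrightarrow{p}\boldsymbol r^*$ by combining the approximation clause (via the best approximant) with the uniform law of large numbers from Assumption~\ref{ass:first-stage-accuracy}, then cancel the $\boldsymbol{Y_G}$ terms to conclude $\widehat{\boldsymbol V}\xrightarrow{p}\boldsymbol{Y_G}-\boldsymbol r^*=\boldsymbol\omega\,\boldsymbol U+\boldsymbol\varepsilon^{(1)}$. You are somewhat more explicit than the paper on two points it leaves implicit --- the exogeneity condition $\mathbb{E}[\boldsymbol\omega\,\boldsymbol U+\boldsymbol\varepsilon^{(1)}\mid\boldsymbol{X_{G^2}},\boldsymbol{X_G},\boldsymbol X]=\boldsymbol 0$ needed to identify $\boldsymbol{Y_G}-\boldsymbol r^*$ with $\boldsymbol V^*$, and the triangle-inequality/diagonalization bookkeeping over the growing sieve $\Theta_n$ --- which is a welcome refinement rather than a different argument.
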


\paragraph{Sketch of Argument.} 

By the approximation part in Equation~\eqref{eq:approximation}, \(\mathcal{F}_n\) contains functions arbitrarily close in \(L_2\) norm to the oracle regression \(\boldsymbol r^*(\boldsymbol {X_{G^2}},\boldsymbol {X_{G}})\) defined in Equation~\eqref{eq:oracle-function}. The uniform law of large numbers ensures that the empirical minimizer \(\widehat{\boldsymbol\alpha}_n\) achieves nearly the same population risk as the best approximant. Hence 
\begin{equation}\label{eq:convergence-r}
\boldsymbol r(\boldsymbol {X_{G^2}},\boldsymbol X_G;\widehat{\boldsymbol\alpha}_n)
\xrightarrow{p}
\boldsymbol r^*(\boldsymbol {X_{G^2}},\boldsymbol {X_{G}}),
\end{equation}
which implies
\begin{equation}\label{eq:residual-limit}
\begin{split}
\widehat{\boldsymbol V}
&= \boldsymbol Y_G - \boldsymbol r(\boldsymbol X_{G^2},\boldsymbol X_G;\widehat{\boldsymbol\alpha}_n)\\
&\xrightarrow{p} \boldsymbol Y_G - \boldsymbol r^*(\boldsymbol X_{G^2},\boldsymbol X_G)
= \boldsymbol\omega\,\boldsymbol U + \boldsymbol\varepsilon^{(1)}.
\end{split}
\end{equation}


\paragraph{\textbf{\textit{Stage 2 of DIG2RSI}: Approximation and Control Function Approach.}}

\textit{Step 1: Decomposition.} Write the structural mapping for the outcome as

\begin{equation}\label{eq:M-decomp}
\begin{split}
\boldsymbol M(\boldsymbol {Y_{G}},\;\boldsymbol {X_{G}},\;\boldsymbol X,\;\boldsymbol U,\;\boldsymbol\varepsilon^{(2)})
&=
\boldsymbol M_0(\boldsymbol {Y_{G}},\;\boldsymbol {X_{G}},\;\boldsymbol X)\\
&\quad+
\boldsymbol \lambda\,\boldsymbol U
+
\boldsymbol\varepsilon^{(2)}.
\end{split}
\end{equation}

Substituting \(\boldsymbol U \approx \boldsymbol \omega^{-1}[\widehat{\boldsymbol V}-\boldsymbol\varepsilon^{(1)}]\) gives
\begin{multline}\label{eq:Y_prime}
\boldsymbol Y'
= \boldsymbol M_0(\boldsymbol {Y_{G}},\;\boldsymbol {X_{G}},\;\boldsymbol X)
  + \frac{\boldsymbol \lambda}{\boldsymbol \omega}\,\widehat{\boldsymbol V} \\
  + \underbrace{
    \boldsymbol\varepsilon^{(2)}
    - \frac{\boldsymbol \lambda}{\boldsymbol \omega}\,\boldsymbol\varepsilon^{(1)}
  }_{\boldsymbol \xi}.
\end{multline}
it satisfies $E\bigl[\boldsymbol{\xi}\mid (\boldsymbol{Y_{G}},\;\boldsymbol{G_{X}},\;\boldsymbol{X},\;\widehat{\boldsymbol{V}}\bigr]\to 0$.

\textit{Step 2: Second-Stage Estimation.}  
Let \(\mathcal{H}_n=\{h(\cdot;\boldsymbol\beta):\boldsymbol\beta\in\mathcal B_n\}\) be the second-stage function class. Assume:

\begin{assumption}[Second-Stage Approximation]\label{ass:A5}
The true function \(\boldsymbol M(\cdot)\) can be well approximated by \(\mathcal H_n\). Define the estimator
\begin{equation}\label{eq:beta_hat}
  \widehat{\boldsymbol \beta}_n
  = \arg\min_{\boldsymbol \beta\in\mathcal B_n}
    \widehat{\boldsymbol R}_{2,n}(\boldsymbol \beta),
\end{equation}
with empirical risk
\begin{equation}\label{eq:second_stage_empirical_risk}
\widehat{\boldsymbol R}_{2,n}(\boldsymbol\beta)
= \frac{1}{n}
\bigl\|
  \boldsymbol Y'
  - h(\boldsymbol {Y_{G}},\;\boldsymbol {X_{G}},\;\boldsymbol X,\;\widehat{\boldsymbol V};\;\boldsymbol\beta)
\bigr\|_2^2.
\end{equation}
\end{assumption}

\begin{theorem}[Consistency of DIG2RSI]\label{thm:consistency}
Under Assumptions~\ref{ass:A5} and standard regularity conditions:
\begin{enumerate}
  \item The parameter spaces \(\Theta_n\) and \(\mathcal{B}_n\) are compact (i.e., closed and bounded subsets of the respective Euclidean spaces, so that minimizers exist and do not escape to infinity).
  \item Uniform laws of large numbers hold for \(\widehat{\boldsymbol R}_{1,n}\) and \(\widehat{\boldsymbol R}_{2,n}\), so that the empirical risks converge uniformly to their population analogues.
  \item The population-level minimizers are well separated (i.e., the risk has a unique well-conditioned minimizer).
  \item \(\sup_{n}\|\widehat{\boldsymbol V}-\boldsymbol V^*\|_2 = o_p(1)\), meaning that the maximum over sample sizes \(n\) of the Euclidean norm difference between the estimated and true first-stage residuals converges to zero in probability.
\end{enumerate}
Then \(\widehat{\boldsymbol\beta}_n \xrightarrow{p} \boldsymbol\beta^*\); that is, the DIG2RSI estimator is consistent for the true peer effect parameter \(\boldsymbol\beta^*\).
\end{theorem}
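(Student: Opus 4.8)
The plan is to treat the second-stage estimator $\widehat{\boldsymbol\beta}_n$ as an extremum (M-)estimator and invoke the classical consistency theorem for such estimators, with one essential modification to accommodate the \emph{generated regressor} $\widehat{\boldsymbol V}$. I introduce the population risk
\begin{equation*}
\boldsymbol R_2(\boldsymbol\beta)
= \mathbb E\bigl[\bigl\|\boldsymbol Y' - h(\boldsymbol {Y_G},\boldsymbol {X_G},\boldsymbol X,\boldsymbol V^*;\boldsymbol\beta)\bigr\|_2^2\bigr],
\end{equation*}
so that the empirical risk $\widehat{\boldsymbol R}_{2,n}$ in Equation~\eqref{eq:second_stage_empirical_risk} is its sample analogue evaluated at $\widehat{\boldsymbol V}$ rather than at the limiting residual $\boldsymbol V^*$. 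The target then reduces to two ingredients: (i) $\boldsymbol R_2$ is uniquely minimized at $\boldsymbol\beta^*$, and (ii) $\sup_{\boldsymbol\beta\in\mathcal B_n}|\widehat{\boldsymbol R}_{2,n}(\boldsymbol\beta)-\boldsymbol R_2(\boldsymbol\beta)|\xrightarrow{p}0$. Together with the compactness of $\mathcal B_n$ (Condition~1), these yield $\widehat{\boldsymbol\beta}_n\xrightarrow{p}\boldsymbol\beta^*$ by the standard argmin-continuity argument.

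For (i), the identification step, I would exploit the control-function representation in Equation~\eqref{eq:Y_prime}: once the residual is included as a regressor, the composite error $\boldsymbol\xi=\boldsymbol\varepsilon^{(2)}-(\boldsymbol\lambda/\boldsymbol\omega)\boldsymbol\varepsilon^{(1)}$ satisfies the conditional mean-zero property $\mathbb E[\boldsymbol\xi\mid \boldsymbol {Y_G},\boldsymbol {X_G},\boldsymbol X,\boldsymbol V^*]=\boldsymbol 0$. Hence the conditional mean of $\boldsymbol Y'$ coincides with the structural map, so $\boldsymbol\beta^*$ minimizes $\boldsymbol R_2$; Condition~3 (a well-separated minimizer) upgrades this to uniqueness and well-conditioning, and Assumption~\ref{ass:A5} guarantees that $\boldsymbol\beta^*$ is attainable in the limit by the sieve $\mathcal H_n$.

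The core of the argument is the uniform-convergence claim (ii), which I would split via the intermediate oracle risk $\widetilde{\boldsymbol R}_{2,n}$ one \emph{would} compute with the true residual $\boldsymbol V^*$:
\begin{equation*}
\begin{split}
\sup_{\boldsymbol\beta}\bigl|\widehat{\boldsymbol R}_{2,n}(\boldsymbol\beta)-\boldsymbol R_2(\boldsymbol\beta)\bigr|
&\le
\sup_{\boldsymbol\beta}\bigl|\widehat{\boldsymbol R}_{2,n}(\boldsymbol\beta)-\widetilde{\boldsymbol R}_{2,n}(\boldsymbol\beta)\bigr|\\
&\quad+
\sup_{\boldsymbol\beta}\bigl|\widetilde{\boldsymbol R}_{2,n}(\boldsymbol\beta)-\boldsymbol R_2(\boldsymbol\beta)\bigr|.
\end{split}
\end{equation*}
The second (sampling-error) term vanishes by the uniform law of large numbers in Condition~2. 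The first (generated-regressor) term is where $\widehat{\boldsymbol V}$ must be controlled: I would show the squared-loss integrand is Lipschitz in its residual argument uniformly over the compact set $\mathcal B_n$, so that this term is bounded by a constant multiple of $\|\widehat{\boldsymbol V}-\boldsymbol V^*\|_2$ (the cross terms handled by Cauchy--Schwarz together with boundedness of $\boldsymbol Y'$ and $h$ on the compact parameter set), which is $o_p(1)$ by Condition~4, itself a consequence of Proposition~\ref{prop:first-stage-residual}.

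I expect the generated-regressor term to be the main obstacle. The difficulty is that $\widehat{\boldsymbol V}$ enters $h(\cdot;\boldsymbol\beta)$ nonlinearly through the neural-network feature extractor, so the required Lipschitz-in-$\boldsymbol V$ bound must hold \emph{uniformly} in $\boldsymbol\beta\in\mathcal B_n$; this is precisely where compactness (Condition~1) and standard smoothness of the activations (bounded weights, Lipschitz ReLU) are essential, converting the pointwise perturbation $\widehat{\boldsymbol V}-\boldsymbol V^*$ into a uniform perturbation of the risk surface. A secondary subtlety is that $\widehat{\boldsymbol V}$ is not independent of the second-stage data; the decomposition above sidesteps this, since it never requires independence, only the $o_p(1)$ rate of Condition~4 and the stochastic equicontinuity supplied by Condition~2.
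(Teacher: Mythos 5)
Your proposal is correct and follows essentially the argument the paper intends: the theorem's four conditions are exactly the ingredients of the classical extremum-estimator consistency theorem (compactness, uniform convergence of the empirical risk, a well-separated population minimizer) plus the generated-regressor condition $\sup_n\|\widehat{\boldsymbol V}-\boldsymbol V^*\|_2=o_p(1)$, which you handle in the standard way by passing through the oracle risk evaluated at $\boldsymbol V^*$ and using a Lipschitz-in-$\boldsymbol V$ bound uniform over the compact parameter set. The paper itself does not spell this argument out in the main text (it relies on the control-function discussion around Equation~\eqref{eq:Y_prime} and Proposition~\ref{prop:first-stage-residual} to supply Condition~4), so your decomposition of the uniform-convergence step into a sampling-error term and a generated-regressor term is, if anything, more explicit than what the paper provides.
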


The \textbf{\textit{second core challenge}} is the unobserved confounder \(\boldsymbol U\), which simultaneously affects the peer treatment \(\boldsymbol {Y_G}\) (via \(\boldsymbol \omega\)) and the outcome \(\boldsymbol Y\) (via \(\boldsymbol \lambda\)). As seen in Equation~\eqref{eq:Y_prime}, \(\boldsymbol Y'\) contains no direct \(\boldsymbol U\) term; the residual \(\widehat{\boldsymbol V}\) absorbs the unobserved variation \(\boldsymbol\omega\,\boldsymbol U + \boldsymbol\varepsilon^{(1)}\), and including \(\widehat{\boldsymbol V}\) as a control function effectively adjusts for the confounding, yielding an unbiased estimate of the peer effect \(\boldsymbol\beta\).


\section*{Experiments}

In this section, we first evaluate DIG2RSI on two real-world social network datasets~\cite{li2015unsupervised}. Next, we assess its performance under varying levels of adversarial weight and unobserved confounding. Finally, we conduct a case study using the Innovation Diffusion among Physicians dataset~\cite{coleman1957diffusion} to estimate peer effects. Each experiment is repeated five times, and we report the mean and standard deviation. Detailed data descriptions, experimental settings, and parameter choices are provided in the Appendix C and D. Experiments are conducted using Python 3.8, CUDA 11.0, NVIDIA Quadro P2200, and Windows OS.

\textbf{Comparison of Algorithms.}
We compare five representative or baseline approaches for peer effect estimation, including (1) 2SLS~\cite{bramoulle2009identification}, which employs second-order neighbour features as IVs within a linear two-stage least squares framework; (2) DL-2SLS~\cite{angrist1995two}, an extended deep learning variant of 2SLS by us to the nonlinear regime; (3) FN-IV~\cite{von2019use}, which  uses the average first-order neighbour features as IVs in a linear two-stage least squares setting, controlling for covariates X in the first stage; (4) LOO~\cite{jochmans2023peer}, which constructs a “leave-one-out” subnetwork by removing all edges incident to the focal individual i; the average second-order neighbour features of this subnetwork serve as IVs in a linear 2SLS framework; and (5) CES~\cite{boucher2024toward}, a nonlinear peer-effect model based on a CES functional form.

\textbf{Evaluation Metrics.} 
To evaluate DIG2RSI and all baselines, we report the \emph{absolute bias},
\( \lvert \hat{\beta}-\beta \rvert \), and the \emph{relative bias},
\( \lvert(\hat{\beta}-\beta)/\beta\rvert \times 100\% \), where
\( \hat{\beta} \) denotes the estimated effect and \( \beta \) is the
ground truth.

\textbf{Results on Peer Effect Estimation Bias.} Table~\ref{tab:bias-bc} and~\ref{tab:bias-fli} report the absolute/relative bias and the estimated PE on the BlogCatalog (BC) and Flickr datasets~\cite{li2015unsupervised}.  
The ground truth peer effect coefficient is fixed at \(0.5\). Because these two datasets include both node features and network structure, we inject unobserved confounders when generating the outcomes; details are provided in Appendix E. We see that DIG2RSI achieves the lowest bias and the largest recovered PE (the closest estimation to the ground truth PE) on both datasets, demonstrating its superiority over all competing methods. The 2SLS baseline exhibits markedly larger biases (e.g., absolute bias $1.1226$ on Flickr) and severely underestimates PE, underscoring the necessity of the control function and adversarial components incorporated in DIG2RSI.  DL-2SLS narrows the gap but still lags behind DIG2RSI, indicating that merely replacing linear regressors with neural networks is insufficient without explicitly modelling endogeneity.

\begin{table}[!t]
  \centering
  \scriptsize
  \begin{tabular}{@{}lrrr@{}}
    \toprule
    Method &
    \multicolumn{1}{c}{Absolute Bias \(\downarrow\)} &
    \multicolumn{1}{c}{Relative Bias~(\%) \(\downarrow\)} &
    \multicolumn{1}{c}{PE} \\ 
    \midrule
    \textbf{DIG2RSI}  & \(0.1689 \pm 0.0488\) & \(33.7881 \pm 9.7652\)  & \(0.3311 \pm 0.0488\) \\
    DL-2SLS  & \(0.2487 \pm 0.0539\) & \(46.7406 \pm 10.7858\) & \(0.2854 \pm 0.0330\) \\
    2SLS     & \(0.2841 \pm 0.0000\) & \(49.8249 \pm 0.0000\)  & \(0.2659 \pm 0.0000\) \\
    \underline{FN-IV}    & \(0.2019 \pm 0.0000\) & \(40.3834 \pm 0.0000\)  & \(0.2981 \pm 0.0000\) \\
    LOO      & \(0.4905 \pm 0.0000\) & \(45.1041 \pm 0.0000\)  & \(0.1095 \pm 0.0000\) \\
    CES      & \(0.5082 \pm 0.0514\) & \(43.6333 \pm 7.3461\)  & \(0.2082 \pm 0.0270\) \\
    \bottomrule
  \end{tabular}
    \caption{Bias comparison on the \textit{BC} dataset. Best method in bold, second best underlined.}
  \label{tab:bias-bc}
\end{table}

\begin{table}[!t]
  \centering
  \scriptsize
  \begin{tabular}{@{}lrrr@{}}
    \toprule
    Method &
    \multicolumn{1}{c}{Absolute Bias \(\downarrow\)} &
    \multicolumn{1}{c}{Relative Bias (\%) \(\downarrow\)} &
    \multicolumn{1}{c}{PE} \\ 
    \midrule
    \textbf{DIG2RSI} & \(0.1968 \pm 0.0299\) & \(39.3611 \pm 5.9878\)  & \(0.3032 \pm 0.0299\) \\
    DL-2SLS & \(0.2425 \pm 0.0476\) & \(48.4986 \pm 9.5231\)  & \(0.2600 \pm 0.1274\) \\
    2SLS    & \(1.1226 \pm 0.0000\) & \(64.5203 \pm 0.0000\)  & \(0.1226 \pm 0.0000\) \\
    \underline{FN-IV}    & \(0.2172 \pm 0.0000\) & \(47.4321 \pm 0.0000\)  & \(0.2981 \pm 0.0000\) \\
    LOO     & \(0.4921 \pm 0.0000\) & \(59.4197 \pm 0.0000\)  & \(0.1792 \pm 0.0000\) \\
    CES     & \(0.3921 \pm 0.0360\) & \(52.3640 \pm 6.0570\)  & \(0.1901 \pm 0.0164\) \\
    \bottomrule
  \end{tabular}
    \caption{Bias comparison on the \textit{Flickr} dataset. Best method in bold, second best underlined.}
  \label{tab:bias-fli}
\end{table}

\textbf{Hyperparameter Impact.}
We analyzed how varying the adversarial weight \(\lambda_{\mathrm{a}}\in[0,0.1]\) influences the estimation bias, as reported in Table~\ref{tab:lambda-bias-bc} (BC) and~\ref{tab:lambda-bias-fli} (Flickr). 
On the \textit{BC} dataset, the bias is minimised at \(\lambda_{\mathrm{a}}=0.01\), indicating that a \emph{small} adversarial contribution helps regularize the first–stage regression; larger weights inject noise into the control function and eventually harm the DIG2RSI estimator. 
On the \textit{Flicker} dataset, the lowest bias occurs at \(\lambda_{\mathrm{a}}=0.02\). 
Compared with the baseline \(\lambda_{\mathrm{a}}=0\), a moderate adversarial term consistently reduces both absolute and relative bias across datasets, demonstrating the effectiveness of the discriminator of our method.

\textbf{Strength of Unobserved Confounding.} 
The analysis of different strengths of unobserved confounding is provided in Appendix~F.

\begin{table}[!t]
  \centering

  \scriptsize
  \begin{tabular}{@{}lccc@{}}
    \toprule
    $\lambda_{\rm a}$ 
      & Absolute Bias \(\downarrow\) 
      & Relative Bias (\%) \(\downarrow\) 
      & PE \\ 
    \midrule
    DIG2RSI\(_{\lambda_{\rm a}=0}\)      & \(0.1904_{\pm0.0211}\) & \(38.0850_{\pm4.2215}\) & \(0.3096_{\pm0.0211}\) \\
    DIG2RSI\(_{\lambda_{\rm a}=0.01}\)   & \(0.1689_{\pm0.0488}\) & \(33.7881_{\pm9.7652}\) & \(0.3311_{\pm0.0488}\) \\
    DIG2RSI\(_{\lambda_{\rm a}=0.02}\)   & \(0.1705_{\pm0.0484}\) & \(34.1029_{\pm9.6897}\) & \(0.3295_{\pm0.0484}\) \\
    DIG2RSI\(_{\lambda_{\rm a}=0.03}\)   & \(0.1811_{\pm0.0390}\) & \(36.2171_{\pm7.7901}\) & \(0.3189_{\pm0.0390}\) \\
    DIG2RSI\(_{\lambda_{\rm a}=0.05}\)   & \(0.1860_{\pm0.0278}\) & \(37.1982_{\pm5.5530}\) & \(0.3140_{\pm0.0278}\) \\
    DIG2RSI\(_{\lambda_{\rm a}=0.08}\)   & \(0.1891_{\pm0.0371}\) & \(37.8241_{\pm7.4246}\) & \(0.3109_{\pm0.0371}\) \\
    DIG2RSI\(_{\lambda_{\rm a}=0.1}\)    & \(0.1929_{\pm0.0414}\) & \(38.5874_{\pm8.2737}\) & \(0.3071_{\pm0.0414}\) \\
    \bottomrule
  \end{tabular}
    \caption{Bias comparison of DIG2RSI under different $\lambda_{\rm a}$ values on the \textit{BC} dataset.}
      \label{tab:lambda-bias-bc}
\end{table}

\begin{table}[!t]
  \centering
  \scriptsize
  \begin{tabular}{@{}lccc@{}}
    \toprule
    $\lambda_{\rm a}$  
      & Absolute Bias \(\downarrow\) 
      & Relative Bias (\%) \(\downarrow\) 
      & PE \\ 
    \midrule
    DIG2RSI\(_{\lambda_{\rm a}=0}\)      & \(0.2411_{\pm0.0479}\) & \(48.2165_{\pm9.5827}\) & \(0.2589_{\pm0.0479}\) \\
    DIG2RSI\(_{\lambda_{\rm a}=0.01}\)   & \(0.1980_{\pm0.0316}\) & \(39.6074_{\pm6.3294}\) & \(0.3020_{\pm0.0316}\) \\
    DIG2RSI\(_{\lambda_{\rm a}=0.02}\)   & \(0.1968_{\pm0.0299}\) & \(39.3611_{\pm5.9878}\) & \(0.3032_{\pm0.0299}\) \\
    DIG2RSI\(_{\lambda_{\rm a}=0.03}\)   & \(0.2002_{\pm0.0328}\) & \(40.0477_{\pm6.5654}\) & \(0.2998_{\pm0.0328}\) \\
    DIG2RSI\(_{\lambda_{\rm a}=0.05}\)   & \(0.1998_{\pm0.0324}\) & \(39.9599_{\pm6.4738}\) & \(0.3002_{\pm0.0324}\) \\
    DIG2RSI\(_{\lambda_{\rm a}=0.08}\)   & \(0.2066_{\pm0.0317}\) & \(41.3218_{\pm6.3451}\) & \(0.2934_{\pm0.0317}\) \\
    DIG2RSI\(_{\lambda_{\rm a}=0.1}\)    & \(0.2213_{\pm0.0289}\) & \(44.2505_{\pm5.7772}\) & \(0.2787_{\pm0.0289}\) \\
    \bottomrule
  \end{tabular}
    \caption{Bias comparison of DIG2RSI under different $\lambda_{\rm a}$ values on the \textit{Flicker} dataset.}
  \label{tab:lambda-bias-fli}
\end{table}

\paragraph{Case Study: Innovation Diffusion among Physicians}
To further validate our proposed DIG2RSI framework in real world applications, we apply it to the seminal Coleman–Katz–Menzel (CKM) “Innovation among Physicians” dataset \cite{coleman1957diffusion}. This dataset was originally collected to examine how adoption of a new drug, tetracycline, diffused through a community of physicians in four Midwestern cities. The data, available through the R package CKM \texttt{spatialprobit}, contains detailed social network information and 13 individual-level attributes for 246 physicians.

In our study, we define the outcome variable as the physician's adoption timing, specifically the month of their first tetracycline prescription. Our primary objective is to estimate the causal effect of peer prescription behavior on an individual physician’s decision to adopt the new drug. We ran the model five times and obtained an average peer effect of $0.2879 \pm 0.0819 $. This aligns with common knowledge that physicians whose friends adopt the new drug earlier are themselves significantly more likely to adopt it earlier.

\section{Conclusion}

\textbf{Summary of Contributions.} In this work, we propose DIG2RSI framework tailored for networked data, which addresses at the same time both the simultaneous feedback loops and unobserved confounders while estimating peer effects in complex, nonlinear, and high-dimensional settings. To disentangle simultaneous feedback in network interactions, we first apply a differencing operator to remove feedback loops and map the network onto a DAG representation. To correct for bias arising from unobserved confounders, we integrate a two-stage residual inclusion mechanism with instrumental variables. We further provide theoretical results establishing consistency of the proposed estimator under appropriate regularity conditions. The effectiveness of our method is validated empirically on semi-synthetic data constructed over two real network structures as well as a real-world physician innovation dataset, demonstrating superior bias correction and PE estimation in the presence of imultaneous feedback and unobserved confounders.
\paragraph{Limitations \& Future Work.} Although the DIG2RSI framework offers numerous advantages, it also has certain limitations. In particular, the current method assumes that the network structure remains unchanged during the causal inference process, which prevents its direct application to dynamically evolving or temporal networks. Future work will focus on extending our approach to temporal and evolving network settings by developing a causal framework capable of capturing the evolution of node relationships.

\bibliography{aaai2026}

\end{document}